\documentclass[11pt,twoside]{article}
\usepackage[utf8]{inputenc}
\usepackage[hmargin=1in,vmargin=1in]{geometry}

\usepackage{microtype}
\usepackage{graphicx}
\usepackage{subfigure}
\usepackage{amsfonts}
\usepackage{booktabs} 

\usepackage{amsmath}
\usepackage{amsthm}

\newtheorem{lemma}{Lemma}[section]
\newtheorem{theorem}[lemma]{Theorem}

\newcommand{\opt}{\textsc{OPT}}
\newcommand{\eps}{\varepsilon}
\newcommand{\nc}{\overline{c}}
\newcommand{\ignore}[1]{}
\newcommand{\etal}{\textit{et al}.}

\title{A Push-Relabel Based Additive Approximation for Optimal Transport}

\author{%
  Nathaniel Lahn%
  \thanks{{Radford University}, {nlahn@radford.edu}.},\qquad
  Sharath Raghvendra%
  \thanks{{Virginia Tech},
  {sharathr@vt.edu}.
  },\qquad
  Kaiyi Zhang%
  \thanks{{Virginia Tech}, {kaiyiz@vt.edu}.}.\thanks{Following convention in theoretical computer science, authors are listed alphabetically by last name.}
}

\date{}

\begin{document}

\maketitle

\begin{abstract}
Optimal Transport is a popular distance metric for measuring similarity between distributions. Exact algorithms for computing Optimal Transport can be slow, which has motivated the development of approximate numerical solvers (e.g. Sinkhorn method). We introduce a new and very simple combinatorial approach to find an $\varepsilon$-approximation of the OT distance. Our algorithm achieves a near-optimal execution time of $O(n^2/\varepsilon^2)$ for computing OT distance and, for the special case of the assignment problem, the execution time improves to $O(n^2/\varepsilon)$. Our algorithm is based on the push-relabel framework for min-cost flow problems.

Unlike the other combinatorial approach (Lahn, Mulchandani and Raghvendra, NeurIPS 2019) which does not have a fast parallel implementation, our algorithm has a parallel execution time of $O(\log n/\varepsilon^2)$. 
Interestingly, unlike the Sinkhorn algorithm, our method also readily provides a compact transport plan as well as a solution to an approximate version of the dual formulation of the OT problem, both of which have numerous applications in Machine Learning.
For the assignment problem, we provide both a CPU implementation as well as an implementation that exploits GPU parallelism. Experiments suggest that our algorithm is faster than the Sinkhorn algorithm, both in terms of CPU and GPU implementations, especially while computing matchings with a high accuracy. 
\end{abstract}

\section{Introduction}
Optimal transport (OT) is a useful metric for measuring similarity between distributions and has numerous applications~\cite{app1,app2,app3,app4,app5,app6}. Given two distributions $\mu$ and $\nu$, this metric captures the minimum-cost plan for transporting mass from $\mu$ to $\nu$. For discrete distributions, the optimal transport problem can be formulated and solved as a linear program. Exact solutions often use a primal-dual approach, i.e., they find a feasible solution to the LP while maintaining a feasible solution for the dual formulation of the LP. When $\mu$ and $\nu$ are not known (but we have access to samples from them) or when they are continuous, one can take $n$ samples from $\mu$ and $\nu$, assign each sample a weight of $1/n$, and compute the optimal transport distance between the samples. This special case, where the weight of every sample point is equal, is called the \emph{assignment} problem. The so-called Hungarian method can be used to solve this special case in $\Theta(n^3)$ time~\cite{hungarian}. 

Various aspects of OT have found applications in machine learning.  
The OT cost can be used to measure similarity between images and for image retrieval tasks. 
 The transport plan itself can be used to interpolate between distributions~\cite{color}. Furthermore, a solution to the assignment problem as well as its dual formulation have been used in GAN training~\cite{gans}. However, exact solvers take too much time for practical purposes.

This has motivated the study of approximation algorithms including numerical and combinatorial methods that generate an $\eps$-approximate transport plan in $\Omega(n^2/\eps)$ time. Perhaps the most popular among these is an entropy regularized version of the optimal transport that can be solved using the Sinkhorn method~\cite{altschulerNIPS17, sinkhorn}. The simplicity of the approach has resulted in its wide use. This algorithm, for instance, benefits from high parallelism making it really scalable, especially in the presence of resources such as GPUs. On the flip side, the entropy regularization requires the use of an exponential function. This may lead to numerical instabilities when $\eps$ becomes small.

One can adapt the combinatorial exact algorithm to quickly find an approximation of the optimal transport. For instance, \cite{our-neurips-2019-otapprox} provided a non-trivial modification of the classical Gabow-Tarjan algorithm for the transportation problem. Their algorithm ran no more than $\lfloor 2/\eps \rfloor + 1$ iterations, where each iteration executed a Dijkstra's shortest path search to find and augment along a set of ``augmenting paths". In the sequential setting, it outperforms the numerical methods, especially when $\eps$ is small.  Unfortunately, similar to the GT-algorithm, this method may need $\Omega(n)$ parallel time. This is because each of the $\Omega(n)$ flow augmentations have to be done in a sequential manner. Furthermore, shortest path algorithms, which are required for computing augmenting paths with small costs, are also hard to parallelize. 

In this paper, we design a new and efficient push-relabel based optimal transport algorithm that does not have issues such as the numerical instabilities displayed by the Sinkhorn method. It also has a parallel execution time of $O(\log n/\eps^2)$, unlike the graph-theoretic algorithm of~\cite{our-neurips-2019-otapprox}, thus resolving an open question posed by Lahn~\etal~\cite{our-neurips-2019-otapprox}. Next, we more formally introduce the optimal transport problem for discrete distributions.

In the optimal transport problem, we are given two discrete distributions $\mu$ and $\nu$ whose supports are the point sets $A$ and $B$, respectively. For each point $a \in A$ (resp. $b \in B$), we associate a probability of $\mu_a$ (resp. $\nu_b$) with it such that $\sum_{a\in A}\mu_a = \sum_{b\in B}\nu_b = 1$. We refer to each point of $A$ as a demand point and each point in $B$ as a supply point. For any edge $(a,b) \in A\times B$, we are given a cost $c(a,b)$; we assume that the costs are scaled so that the largest cost edge is $1$. Let $\beta c(a,b)$ be the cost of transporting a supply amount of $\beta$ 
from $b$ to $a$.  A transport plan is a function $\sigma:A\times B\rightarrow R_{\ge0}$ that assigns a non-negative value to each edge of $G$, indicating the amount of supply transported along the edge.  The transport plan $\sigma$ is such that the total supplies transported into (resp. from) any demand (resp. supply) node $a \in A$ (resp. $b\in B$) is bounded by the demand (resp. supply) at $a$ (resp. $b$). 
The cost of the transport plan, denoted by $w(\sigma)$, is  given by $\sum_{(a,b) \in A\times B}\sigma(a,b)c(a,b)$. In this optimal transport problem, we are interested in finding a minimum-cost transport plan that transports all of the supply, denoted by $\sigma^*$. 
We also define an $\eps$-approximate transport plan to be any transport plan $\sigma$ with a cost $w(\sigma) \le w(\sigma^*)+\eps$ that transports all of the supply. 

The special case where $A$ and $B$ each contain $n$ points and where every point in $A$ (resp. $B$) has a demand of $1/n$ (resp. supply of $1/n$) is called the \emph{assignment problem}. In this special case, there is an optimal transport plan with a special structure; specifically, one where all edges $(a,b)$ with $\sigma(a,b)> 0$ are vertex-disjoint, i.e., they form a matching. 
 We say that a matching on $G$ is \emph{perfect} if it has $n$ edges.

For simplicity in exposition, we will assume that, for the assignment problem, all demands and supplies are $1$.
The cost of any matching $M$, denoted by $c(M)$ is the total cost  of all of its edges, i.e.,
$$c(M) = \sum_{(a,b) \in M} c(a,b).$$ 
In the assignment problem, we are interested in finding a minimum-cost perfect matching $M^*$. For an $\eps >0$, we say that a matching $M$ is an additive $\eps$-approximation if $c(M) \le c(M^*)+ \eps n$. 
 Therefore, the error allowable in an $\eps$-approximate matching is $+\eps n$.

\paragraph{Our Results:} In this paper, we obtain the following results:
\begin{itemize}
    \item We present a very simple algorithm to compute an $\eps$-approximate transport plan in $O(n^2/\eps^2)$ time. 
For the special case of assignment problem, our algorithm's execution time improves to $O(n^2/\eps)$. Our algorithm is based on the popular push-relabel framework for computing minimum-cost flow.
    \item Our algorithm maintains a matching $M$ and a set of dual weights $y(\cdot)$ on vertices of $A\cup B$. The algorithm runs in $O(1/\eps^2)$ iterations and in each iteration, it executes three steps. First, it greedily computes a maximal matching $M'$. In the second step, it uses $M'$ to update the matching $M$ (the push step). Finally, it updates the dual weights (relabel step). The push and relabel step take only $O(n)$ sequential time and $O(1)$ parallel time. The only non-trivial step, therefore, is the computation of a greedy maximal matching which can be done in $O(n^2)$ sequential time and $O(\log n)$ parallel time.
      
    \item For the assignment problem, we provide a CPU implementation as well as an implementation that exploits GPU parallelism. Experiments suggest that both the CPU and GPU implementations of our algorithm outperform corresponding CPU and GPU implementations of the Sinkhorn algorithm provided by the Python Optimal Transport library~\cite{POT} in terms of running time, while achieving the same level of accuracy.
\end{itemize} 

Computing maximal matching has been studied in several different models, such as the massively parallel computation model, by both theoreticians~\cite{behnezhad2019exponentially} and practitioners~\cite{auer2012gpu}. As a result, it is likely that a more careful parallel implementation may
 result in even further improvements in terms of running time. Our current CPU and GPU implementations are only for the assignment problem. We are in the process of extending our implementation to the optimal transport problem and will include the results in the next version of this paper.

\paragraph{Organization:} In Section~\ref{sec:prelim}, we present the definitions required to describe our algorithm. In Section~\ref{sec:main-algo}, we present our algorithm for the assignment problem. We present an algorithm that computes a $3\eps$-approximation of the optimal transport. To obtain an $\eps$-approximation, one can simply choose the error factor in the algorithm to be $\eps/3$. In Section~\ref{sec:analysis}, we prove the sequential and parallel complexity of our algorithm for the assignment problem. In Section~\ref{sec:transport}, we extend our algorithm to the optimal transport problem. Finally, we present the experimental results in Section~\ref{sec:experiments}.

\section{Algorithm}
In this section, given an input to the assignment problem and a value $0 < \eps < 1$, we present an algorithm that computes a $3\eps$-approximate matching.
\subsection{Preliminaries}
\label{sec:prelim}
We begin by introducing the terminologies required to understand our algorithm for the assignment problem. For any matching $M$, we say that any vertex $v \in A\cup B$ is \emph{free} if $v$ is not matched in $M$ and \emph{matched} otherwise. Our algorithm critically uses the notion of a maximal matching which we introduce next. For any bipartite graph that is not necessarily complete, any matching $M$ is \emph{maximal} if and only if at least one end point of every edge in the graph is matched in $M$. Thus, if a matching is not maximal, there is at least one edge between two free vertices. One can, therefore, compute a maximal matching in a greedy fashion by iteratively picking such an edge and adding it to the matching.

For every edge $(u,v) \in A\times B$, we transform its cost so that it becomes an integer multiple of $\eps$ as follows:
\begin{equation}
    \nc(u,v) = \eps \lfloor c(u,v)/\eps \rfloor
\end{equation}
The rounding of edge costs may introduce an error that is bounded by $\eps$ for each edge and by at most $\eps n$ for any matching. Our algorithm assigns a dual weight $y(v)$ for every $v \in A\cup B$ such that a set of relaxed dual feasibility conditions are satisfied. A matching $M$ along with dual weights $y(\cdot)$ is $\eps$-feasible if, for every edge $(a,b) \in A \times B$, 
\begin{eqnarray}
    &y(a) + y(b) \leq \nc(a,b) +\eps\quad &\mathrm{if\ } (a,b) \notin M \label{eq:feas1}\\
    &y(a) + y(b) = \nc(a,b) \quad &\mathrm{if \ } (a,b) \in M \label{eq:feas2}
\end{eqnarray}
In Lemma~\ref{lem:error-balanced}, we show that any $\eps$-feasible matching produced by our algorithm has a cost within an additive error of $\eps$ from the optimal solution with respect to the costs $\nc(\cdot, \cdot)$. For any edge $(u,v)$, we define its \emph{slack} $s(u,v)$ to be $0$ if $(u,v) \in M$. Otherwise, if $(u,v) \not\in M$, we set its slack to be $s(u,v)=\nc(u,v) - y(u) - y(v)$. We say that $(u,v)$ is admissible if the slack on the edge is $0$. 

We observe that any matching $M$ whose cardinality is at least $(1-\eps)n$ can be converted into a perfect matching simply by arbitrarily matching the remaining $\eps n$ free vertices. The cost of any edge is at most $1$, and so, this increases the cost of the matching $M$ by at most $\eps n$. In addition to this, the rounding of costs from $c(\cdot,\cdot)$ to $\nc(\cdot,\cdot)$ also introduces an increase of cost by $\eps n$. Finally, the $\eps$-feasibility conditions introduced an additional additive error of $\eps n$, for a total error of $3\eps n$, as desired. Thus, in the rest of this section, we present an algorithm that computes an $\eps$-feasible matching of cardinality at least $(1-\eps)n$, which will has a cost no more than $\eps n$ above the optimal matching's cost with respect to $\nc(\cdot, \cdot)$.

\subsection{Algorithm}
\label{sec:main-algo}
Initially, we set the dual weight of every vertex $b \in B$ to be $\eps$ and every vertex $a\in A$ to be $0$. We initialize $M$ to $\emptyset$. Our initial choice of $M$ and the dual weights satisfies~\eqref{eq:feas1} and~\eqref{eq:feas2}. Our algorithm executes iterations, which we will call \emph{phases}. Within each phase, the algorithm constructs the set $B'$, which consists of all free vertices of $B$. If $|B'| \le \eps n$, then $M$ is an $\eps$-feasible matching of cardinality at least $(1 - \eps)n$, and the algorithm will arbitrarily match the remaining free vertices and return the resulting matching. Otherwise, the algorithm computes the subset $E' \subseteq E$ of admissible edges with at least one end point in $B'$. Let $A' = \{a \mid  a \in A \textrm{ and } (a,b) \in E'\}$, i.e., the set of points of $A$ that participate in at least one edge in $E'$. For each phase, the algorithm executes the following steps:
\begin{itemize}
    \item [(I)] {\it Greedy step:} Computes a maximal (i.e., greedy) matching $M'$ in the graph $G'(A'\cup B', E')$.
    \item[(II)] {\it Matching Update:} Let $A''$ be the set of points of $A'$ that are matched in both $M$ and $M'$ and let $M''$ be the edges of $M$ that are incident on some vertex of $A''$. The algorithm adds the edges of $M'$ to $M$ and deletes the edges of $M''$ from $M$.
    \item[(III)] {\it Dual Update:} 
    \begin{itemize} 
    \item[a.] For every edge $(a,b) \in M'$, the algorithm sets $y(a) \leftarrow y(a)-\eps$, and
    \item[b.] For every vertex $b \in B'$ that is free with respect to $M'$, the algorithm sets $y(b) \leftarrow y(b) +\eps$.
    \end{itemize}
\end{itemize}

In each phase, the matching update step will add edges of $M'$ to $M$ and remove edges of $M''$ from $M$. By construction, the updated set $M$ is a matching. Furthermore, every vertex of $A$ that was matched prior to the update continues to be matched after the update.

\begin{lemma}
\label{lem:match-update}
The new set $M$ of edges obtained after Step (II) is a matching. Furthermore, any vertex of $A$ that was matched prior to Step (II) will continue to be matched after the execution of Step (II).
\end{lemma}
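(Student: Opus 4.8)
The plan is to show directly that the edge set produced by Step~(II), namely $(M \setminus M'') \cup M'$, has no vertex of degree two or more, and then to track what happens to each previously matched vertex of $A$. Since $M$ and $M'$ are each matchings in their own right, $M\setminus M''$ and $M'$ are both matchings, so any violation of the matching property in their union must come from an edge of $M'$ sharing an endpoint with an edge of $M \setminus M''$. I would therefore split the analysis according to which side, $A$ or $B$, the shared endpoint lies on.

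First I would rule out conflicts on the $B$-side. Every $B$-endpoint of an edge of $M'$ lies in $B'$, and $B'$ is by definition the set of vertices of $B$ that are \emph{free} with respect to $M$. Hence no vertex of $B$ matched in $M'$ can be matched in $M$, so the $B$-endpoints used by $M'$ are disjoint from those used by $M \setminus M''$. This settles the $B$-side with essentially no case analysis.

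The $A$-side is where the real work lies, and it is precisely where the deletion of $M''$ earns its keep. A shared $A$-endpoint would be a vertex $a$ matched both by some edge of $M'$ and by some edge of $M \setminus M''$. Any $a$ matched in $M'$ is an endpoint of an edge of $G'$ and hence lies in $A'$; if in addition $a$ is matched in $M$, then by definition $a \in A''$, so the unique edge of $M$ incident to $a$ belongs to $M''$ and is removed in Step~(II). Thus $a$ retains no $M$-edge in $M \setminus M''$, and its only surviving edge is the one from $M'$. I expect this observation, that every potential $A$-side collision is exactly a vertex of $A''$ whose old matching edge is deleted, to be the main (and really the only) obstacle, and it is dispatched simply by matching up the definitions of $A''$ and $M''$. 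Conversely, if $a$ is matched in $M$ but not in $M'$, then $a \notin A''$ and its $M$-edge survives, so again $a$ has degree exactly one. This exhausts the cases and proves that $(M\setminus M'')\cup M'$ is a matching.

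For the second claim I would reuse this same case split applied to an arbitrary vertex $a\in A$ that is matched before Step~(II). If $a$ is matched in $M'$, then $a\in A''$ and, although its $M$-edge is deleted, it acquires its $M'$-edge, so it stays matched; if $a$ is not matched in $M'$, then $a\notin A''$, its $M$-edge is never placed in $M''$, and so it survives the deletion. In either case $a$ remains matched, which completes the proof.
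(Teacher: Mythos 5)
Your proof is correct and follows the only natural route here: the paper itself dispenses with this lemma by asserting it holds ``by construction,'' and your argument simply supplies the details that assertion leaves implicit (the $B$-side collisions are impossible because $M'$ only touches $M$-free vertices of $B'$, and the $A$-side collisions are exactly the vertices of $A''$, whose old edges are deleted as $M''$). Both the matching property and the preservation of matched $A$-vertices are handled correctly.
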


The dual update step increases or reduces dual weights by $\eps$. Therefore, the dual weights always remain an integer multiple of $\eps$. 

The algorithm maintains the following invariants:

\begin{itemize}
    \item [(I1)] The dual weight of every vertex in $B$ (resp. $A$) is non-negative (resp. non-positive). Furthermore, every free vertex of $A$ has a dual weight of $0$.
    \item[(I2)] The matching $M$ and a set of dual weights $y(\cdot)$ is $\eps$-feasible.
\end{itemize}

Note that (I1) and (I2) are true at the start of the algorithm. Inductively assume that the invariants hold at the start of any phase. We show that the invariants continue to hold at the end of the phase.

\paragraph{Proof of (I1):} 
The dual update step (III) only increases the dual weight of $B$ and reduces the dual weight of $A$. Therefore, the dual weights of $A$ remain non-positive and the dual weights of $B$ remain non-negative. Next, we show that every free vertex of $A$ with respect to the matching $M$ continues to have a dual weight of $0$ at the end of this phase. Clearly, step (I) does not affect this property. Step (II) updates the matching $M$. From Lemma~\ref{lem:match-update}, every matched vertex of $A$ remains matched after the execution of Step (II). So, every free vertex of $A$ continues to have a dual weight of $0$. During step (III (a)),
the dual weight for any vertex $a \in A$ reduces only if it is matched in $M'$. By construction, step (II) will add the edges of $M'$ to $M$. So, every vertex of $A$ whose dual weight is updated by (III) is matched in $M$ after updating $M$.  
Consequently, the dual weights of all free vertices of $A$ remain unchanged, i.e., their dual weight remains $0$ implying (I1).

\paragraph{Proof of (I2):} 
To show that the edge remains feasible, we need to show that the slack on every edge remains non-negative and all edges of $M$ have zero slack. During each phase, the feasibility conditions could potentially be affected for two reasons. First, the matching $M$ gets updated in step (II). Second, the dual weights of certain points change in Step (III). At the start of the phase, consider an edge $(a,b)$ that is feasible but not admissible, i.e., it has a slack of at least $\eps$. The matching $M'$ consists only of admissible edges, and, therefore, $(a,b) \not\in M$. Observe that the dual weight of $b$ may increase by $\eps$ in Step (III(b)) reducing its slack by $\eps$. The slack, however, remains non-negative. In Step (III(a)), the dual weight of $a$ may reduce by $\eps$, only increasing the slack on $(a,b)$. Therefore, the edge $(a,b)$ continues to be feasible and satisfies~\eqref{eq:feas2}.

Next, consider an edge $(a,b)$ that is admissible at the start of the phase. At the end of the phase, there are two possibilities: either $(a,b)$ is in the matching, or it is not in the matching.  
We begin by showing that if $(a,b)$ is in the updated matching $M$ after Step (II), then $(a,b)$ satisfies~\eqref{eq:feas2} after Step (III).

There are two possibilities: (i) Prior to the matching update step, the edge $(a,b)$ was in $M$, or (ii) $(a,b)$ is an edge in $M'$ and is added to $M$ by Step (II).

In case (i), the dual weights of $a$ and $b$ remain unchanged in Step (III) and therefore, $(a,b)$ satisfies~\eqref{eq:feas2}. 
For case (ii), observe that, at the start of the phase, $(a,b) \in E'$ is a non-matching admissible edge. Therefore, the edge $(a,b)$ satisfies 
\begin{equation*}
y(a) + y(b) = \nc(u,v) +\eps. \label{eq:admissible}
\end{equation*} 
The update of (III(b)) does not change the dual weight of $b$, and the update step (III(a)) reduces the dual weight of $a$: $y(a) \leftarrow y(a) - \eps$. After step (III), therefore, we have $(a,b) \in M$ satisfying~\eqref{eq:feas2}.  Thus, at the end of the phase, every edge of $M$ satisfies~\eqref{eq:feas2}.

Next, we show that, for every edge $(a,b)$ that is both admissible at the start of the iteration and is a non-matching edge at the end of the iteration, $(a,b)$ satisfies~\eqref{eq:feas1}. There are two possibilities: (i) $b \not\in B'$, and, (ii)  $ b \in B'$.

In case (i), at the start of the phase the edge satisfies~\eqref{eq:feas1}; note that matching edges at the start of the phase satisfy~\eqref{eq:feas2}, which means~\eqref{eq:feas1} is also satisfied. During the dual update step, only points of $B'$ may undergo a dual update. Since $b \not\in B'$, $y(b)$ remains unchanged. The dual weight of $a$ may reduce by $\eps$, but this only increases the slack of $(a,b)$. Therefore~\eqref{eq:feas1} continues to hold. 

For case (ii), since $(a,b)$ is admissible and remains a non-matching edge at the end of the phase, $(a,b)\not\in M'$. Since $M'$ is a maximal matching, $a$ is matched to another vertex $b'$ in $M'$, i.e., $(a,b')$ is in $M'$. While the dual weight of $b$ may increase by $\eps$ in Step (III(b)), step (III(a)) will reduce the dual weight of $a$ (since $a$ is matched in $M'$) by $\eps$, ensuring that the slack on $(a,b)$ remains at least $0$. This completes the proof of (I2).

\section{Analysis}
\label{sec:analysis}
Next, in Section~\ref{sec:accuracy}, we use invariants (I1) and (I2) to show that the algorithm produces a matching with the desired accuracy. In Section~\ref{sec:efficiency}, we use the invariants to bound the sequential and parallel execution times of our algorithm.

\subsection{Accuracy}
\label{sec:accuracy}
As stated in Section~\ref{sec:prelim}, the rounding of costs from $c(\cdot,\cdot)$ to $\nc(\cdot,\cdot)$ introduces an error of $\eps n$. Furthermore, after obtaining a matching of size at least $(1 - \eps)n$, the cost of arbitrarily matching the last $\eps n$ vertices is no more than $\eps n$. From the following lemma, we can conclude that the total error in the matching computed by our algorithm is no more than $+3\eps n$. 

\begin{lemma}
\label{lem:error-balanced}
The $\eps$-feasible matching of size at least $(1 - \eps) n$ that is produced by the main routine of our algorithm is within an additive error of $\eps n$ from the optimal matching with respect to the rounded costs $\nc(\cdot,\cdot)$
\end{lemma}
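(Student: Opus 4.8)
The plan is to run a standard primal--dual (weak duality) argument, using the $\eps$-feasible dual weights $y(\cdot)$ as a certificate that lower-bounds the cost of \emph{every} perfect matching while simultaneously upper-bounding the cost of our own matching $M$. Let $M^*$ denote a minimum-cost perfect matching with respect to $\nc(\cdot,\cdot)$, and write $Y = \sum_{v \in A \cup B} y(v)$ for the total dual weight. The argument sandwiches $Y$ between the two quantities of interest.

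First I would lower-bound $\nc(M^*)$ by $Y - \eps n$. For every edge $(a,b)$, the $\eps$-feasibility conditions~\eqref{eq:feas1} and~\eqref{eq:feas2} both imply the single inequality $\nc(a,b) \ge y(a) + y(b) - \eps$ (matching edges satisfy it with the stronger equality $\nc(a,b) = y(a)+y(b)$, which is even more favorable). Summing this inequality over the $n$ edges of the perfect matching $M^*$, and observing that each of the $2n$ vertices is covered exactly once, yields $\nc(M^*) \ge Y - \eps n$, i.e.\ $Y \le \nc(M^*) + \eps n$.

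Next I would upper-bound $\nc(M)$ by $Y$. Since every edge of $M$ is tight by~\eqref{eq:feas2}, we have $\nc(M) = \sum_{(a,b)\in M}(y(a)+y(b)) = \sum_{v \text{ matched}} y(v)$, where the sum ranges over vertices covered by $M$. The gap between this sum and $Y$ is exactly the contribution $\sum_{v \text{ free}} y(v)$ of the uncovered vertices. Here invariant (I1) does the essential work: free vertices of $A$ have dual weight $0$, and all vertices of $B$ (in particular the free ones) have non-negative dual weight, so the free-vertex contribution is non-negative and dropping it can only decrease the total. Hence $\nc(M) \le Y$.

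Chaining the two bounds gives $\nc(M) \le Y \le \nc(M^*) + \eps n$, which is the claim. The only real subtlety --- the step I would flag as the crux --- is the treatment of the uncovered vertices, since $M$ need not be perfect and a naive summation over $A \cup B$ would not equal $\nc(M)$; it is precisely invariant (I1) (zero dual on free $A$-vertices, non-negative dual on free $B$-vertices) that lets us discard the free-vertex contribution in the correct direction. Everything else is routine summation and sign-tracking, so no further case analysis should be needed.
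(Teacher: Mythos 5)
Your proof is correct and follows essentially the same weak-duality argument as the paper: both sandwich the total dual weight $\sum_{v \in A \cup B} y(v)$ between $\nc(M)$ (using tightness of matching edges and the non-negativity of free-vertex duals from invariant (I1)) and $\nc(M^*) + \eps n$ (using $\eps$-feasibility over the perfect optimal matching). The only difference is presentational; your explicit flagging of the free-vertex contribution is a slightly more careful write-up of the same step.
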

\begin{proof}
Let $M$ be the matching produced by our algorithm, and let $M_{\opt}$ be an optimal matching with respect to the cost function $\nc(\cdot, \cdot)$. From~\eqref{eq:feas1}, and the fact that the dual weights of all free vertices with respect to $M$ are non-negative we have $\sum_{(a,b) \in M}\nc(a,b) = \sum_{(a,b) \in M}y(a) + y(b) \leq \sum_{v \in A \cup B}y(v)$. Note that $M_{\opt}$ is a perfect matching, and so, from~\eqref{eq:feas2}, we get $\sum_{v \in A \cup B}y(v) = \sum_{(a,b) \in M_{\opt}} y(a) + y(b) \leq \sum_{(a,b) \in M_{\opt}} \nc(a,b) + \eps n$. Combining these two observations completes the proof of the lemma.
\end{proof}

\subsection{Efficiency}
\label{sec:efficiency}
Suppose there $t$ phases executed by the algorithm. We use $n_i$ to denote the size of $B'$ in phase $i$. By the termination condition, each phase is executed only if $B'$ has more than $\eps n$ vertices, i.e.,  $n_i > \eps n$. First, in Lemma~\ref{lem:magbound}, we show that the magnitude of the dual weight of any vertex cannot exceed $(1+2\eps)$. This means the total dual weight magnitude over all vertices is upper bounded by $n(1+2\eps)$. Furthermore, in Lemma~\ref{lem:perphase} we show that, during phase $i$, the total dual weight magnitude increases by at least $\eps n_i$. From this, we can conclude that 
\begin{equation}
    \sum_{i=1}^t n_i \le n(1+2\eps)/\eps = O(n / \eps).\label{eq:total-free-vertices}
\end{equation}

Note that, since each $n_i \ge \eps n$, we immediately get $t\eps n \le n(1+2\eps)/\eps$, or $t \le (1+2\eps)/\eps^2 = O(1 / \eps^2)$. In order to get the total sequential execution time, we show, in Lemma~\ref{lem:efficientphase} that each phase can be efficiently executed in $O(n\times n_i)$ time. Combining this with equation~\eqref{eq:total-free-vertices} gives an overall sequential execution time of $O(n (\sum_{i=1}^t n_i)) = O(n^2 / \eps)$.

\begin{lemma}
\label{lem:magbound}
For any vertex $v \in A\cup B$, the magnitude of its dual weight cannot exceed $1+2\eps$, i.e., $|y(v)| \le (1+2\eps)$.
\end{lemma}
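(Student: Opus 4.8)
The plan is to bound $|y(v)|$ separately for vertices in $A$ and in $B$, using invariant (I1) to control the signs and the $\eps$-feasibility condition~\eqref{eq:feas1} to couple the two. Recall from (I1) that every $y(a)$ for $a \in A$ is non-positive and every $y(b)$ for $b \in B$ is non-negative, so it suffices to bound $-y(a)$ from above for $a \in A$ and $y(b)$ from above for $b \in B$. Since $\nc(\cdot,\cdot) \le 1$ (as costs are scaled so the maximum is $1$, and rounding down keeps $\nc(a,b) \le c(a,b) \le 1$), feasibility will give us a handle on each quantity in terms of the other.

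First I would bound the dual weight of the vertices in $B$. The key observation is that a vertex $b \in B$ only has its dual weight raised (in Step III(b)) while it is \emph{free} with respect to $M'$, and once $b$ becomes matched it stays matched thereafter by Lemma~\ref{lem:match-update} (its matched partner in $A$ stays matched, and one can argue $b$ itself is never freed since the matching update only removes edges incident on $A''$, whose $A$-endpoints are immediately rematched by $M'$). So I would argue that the moment $b$ first gets matched, its dual weight stops increasing; I want to show the final value of $y(b)$ is at most $1 + 2\eps$. Consider the edge $(a,b)$ that matches $b$: at that point $(a,b)$ was admissible, so $y(a) + y(b) = \nc(a,b) + \eps$ (the form of admissibility derived in the proof of (I2)), giving $y(b) = \nc(a,b) + \eps - y(a) \le 1 + \eps - y(a)$. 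Since $y(a) \le 0$, this alone is not enough, so the bound on $y(b)$ must be obtained jointly with the bound on $-y(a)$.

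To close the coupling, I would bound $-y(a)$ for $a \in A$. A vertex $a$'s dual weight only drops (Step III(a)) when $a$ is matched in $M'$ along an admissible edge $(a,b)$ with $b \in B'$; at that instant $y(a) + y(b) = \nc(a,b) + \eps$, i.e. $-y(a) = y(b) + \eps - \nc(a,b) \le y(b) + \eps$. Thus whenever $a$'s weight decreases by $\eps$, the partner $b \in B'$ had $y(b)$ at most the current $-y(a) - \eps$ plus $\nc(a,b)$; the natural approach is an inductive/amortized argument tracking the two maxima together. Concretely, I would let $Y_B = \max_{b}\,y(b)$ and $Y_A = \max_{a}(-y(a))$ over the run and show by induction on phases that $Y_B \le 1 + 2\eps$ and $Y_A \le 1 + \eps$: a decrease making $-y(a) = Y_A$ forces an admissible edge to some $b \in B'$ with $y(b) \le Y_B$, and since $b \in B'$ means $b$ is still free (hence $y(b)$ governed by the inductive bound), we get $-y(a) \le y(b) + \eps \le (1+2\eps) + \eps$ — which would overshoot, so the delicate point is the exact accounting.

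The main obstacle is precisely this mutual bookkeeping: establishing that the two maxima cannot bootstrap each other upward past $1 + 2\eps$. I expect the clean way to handle it is to observe that a vertex $b \in B$ has its weight raised only while free, and a free $b$ cannot simultaneously be the partner used to lower some $y(a)$ in the same phase, so each unit increase in $y(b)$ and each decrease in $y(a)$ can be charged against the starting slack or the bound $\nc(a,b) \le 1$ rather than compounding across phases. Making this charging argument airtight — identifying the single admissible edge that certifies each extreme dual value and verifying via~\eqref{eq:feas1} that it forces $y(b) \le \nc(a,b) + \eps \le 1 + \eps$ and $-y(a) \le y(b) + \eps \le 1 + 2\eps$ without a feedback loop — is the crux; the arithmetic with $\eps$ factors and the initialization $y(b) = \eps$, $y(a) = 0$ is then routine.
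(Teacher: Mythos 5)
Your proposal does not close; it correctly identifies the circular dependency between $\max_b y(b)$ and $\max_a(-y(a))$ but never breaks it, and the device you lean on to break it is false. Specifically, you claim that once a vertex $b \in B$ becomes matched it is never freed again. That is not so: in Step (II), when a vertex $a \in A''$ is rematched by $M'$ to a new partner $b_{\mathrm{new}} \in B'$, its old matching edge $(a, b_{\mathrm{old}})$ is deleted from $M$, so $b_{\mathrm{old}}$ becomes free and can resume having its dual weight raised in later phases; Lemma~\ref{lem:match-update} protects only the $A$-side. This is the essential push-relabel behavior of the algorithm, and without the ``matched once, matched forever'' property your charging scheme has no anchor: the admissible-edge identity $y(b) = \nc(a,b) + \eps - y(a)$ only bounds $y(b)$ by $1 + \eps + (-y(a))$, and bounding $-y(a)$ in turn requires a bound on some $y(b')$, which is exactly the feedback loop you flag as ``the crux'' and leave unresolved.

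The idea you are missing is a global one rather than a per-vertex charging argument. Each executed phase begins with more than $\eps n$ free vertices of $B$, and since $|A| = |B| = n$ and $M$ is a matching, there is also a free vertex $a^* \in A$ at the start of the last phase; invariant (I1) gives $y(a^*) = 0$. Because the bipartite graph is complete and $a^*$ is free (so $(a^*,b) \notin M$), condition~\eqref{eq:feas1} applied to the edge $(a^*, b)$ yields $y(b) \le \nc(a^*, b) + \eps - y(a^*) \le 1 + \eps$ for \emph{every} $b \in B$ simultaneously at that moment; one further phase raises any $y(b)$ by at most $\eps$, giving $y(b) \le 1 + 2\eps$ at termination. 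The bound for $A$ then follows with no circularity: a matched $a$ satisfies $y(a) = \nc(a,b) - y(b) \ge -(1+2\eps)$ by~\eqref{eq:feas2}, and a free $a$ has $y(a) = 0$. Note that the order matters --- $B$ is bounded first using the zero-dual free vertex of $A$, and only then is $A$ bounded in terms of $B$ --- which is precisely what dissolves the mutual bootstrapping you were worried about.
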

\begin{proof}
Since the algorithm only increases dual weight magnitudes, it is sufficient to show that the claim holds at the end of the algorithm. First, we show that the claim holds for all vertices of $B$. Let $a$ be an arbitrary free vertex of $A$ at the beginning of the last phase of the algorithm. From invariant (I1), the dual weight $y(a)$ is $0$. For every vertex $b \in B$, the edge $(a,b)$ satisfies either equation~\eqref{eq:feas1} or equation~\eqref{eq:feas2}, and, therefore, $y(b) \leq \bar{c}(a,b) + \eps - y(a) = \bar{c}(a,b) + \eps \leq 1 + \eps$. Now, observe that the maximum dual weight magnitude increase for any vertex during a single iteration, including the final iteration, is $\eps$. Thus, the maximum possible value of $y(b)$ for any $b \in B$ at the end of the algorithm is $1 + 2\eps$. 

Next, we show that, for any vertex $a \in A$, $|y(a)| < 1 + 2\eps$. If $a$ is free, this holds true from invariant (I1). Otherwise, $a$ is matched to some $b \in B$, and the edge $(a,b)$ is $\eps$-feasible, implying that $y(a) = \bar{c}(a,b) - y(b) \geq -y(b) \geq -(1 + 2\eps)$. Thus, $|y(a)| \leq 1 + 2\eps$ for every $a \in A$.
\end{proof}

\begin{lemma}
\label{lem:perphase}
The sum of the magnitude of the dual weights increases by at least $\eps n_i$ in each iteration.
\end{lemma}
\begin{proof}
Let $b$ be a vertex of $B'$ at the beginning of some iteration. If $b$ remains free at the end of the iteration, then $|y(b)|$ increases by $\eps$ during that iteration. Otherwise, $b$ was matched to a vertex $a$ in $M'$ during this iteration. This implies that $|y(a)|$ increases by $\eps$ during that iteration. Therefore, each vertex of $B'$ causes some vertex to experience an increase to its dual weight magnitude, and the total dual weight increase is at least $\eps n_i$.
\end{proof}

\begin{lemma}
\label{lem:efficientphase}
The execution time of each phase is $O(n\times n_i)$ time.
\end{lemma}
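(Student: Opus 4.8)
The plan is to account separately for the running time of each operation performed within a single phase, and then observe that every one of them is dominated by the cost of examining the edges incident on $B'$. First I would partition the work of a phase into five tasks: (a) constructing the set $B'$ of free vertices of $B$; (b) constructing the admissible edge set $E'$ together with the associated vertex set $A'$; (c) computing the greedy maximal matching $M'$ of Step (I); (d) carrying out the matching update of Step (II); and (e) carrying out the dual update of Step (III). I will bound each task in turn and take the maximum.

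Task (a) is a single scan over the vertices of $B$ and their matched status, costing $O(n)$ time. The crucial observation for task (b) is that every edge of $E'$ has an endpoint in $B'$, and since the underlying graph on $A \times B$ is complete bipartite, there are exactly $n \cdot n_i$ edges incident on $B'$. Testing whether a single edge $(a,b)$ is admissible amounts to checking $\nc(a,b) - y(a) - y(b) = 0$, which takes $O(1)$ time given the stored dual weights. Hence $E'$, and therefore $A'$ (read off from the endpoints of the edges in $E'$), can be built in $O(n \times n_i)$ time, and in particular $|E'| = O(n \times n_i)$.

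For task (c), I would invoke the standard greedy procedure for maximal matching on $G'(A' \cup B', E')$: iterate over the vertices of $B'$, and for each such $b$ scan its incident admissible edges until a currently free neighbour in $A'$ is found, matching them when possible. Since every edge of $E'$ is examined only $O(1)$ times, this runs in $O(|E'|) = O(n \times n_i)$ time. Tasks (d) and (e) touch only the at most $n_i$ vertices matched by $M'$: by Lemma~\ref{lem:match-update} the update merely adds the $|M'| \le n_i$ edges of $M'$ and removes the set $M''$ of $M$-edges incident on $A''$, and $|M''| \le |M'| \le n_i$; the dual update likewise processes the $O(n_i)$ edges of $M'$ and the at most $n_i$ free vertices of $B'$. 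With the matching stored so that the partner of any vertex can be looked up in $O(1)$ time, both tasks complete in $O(n_i)$ time. Summing the five contributions, tasks (b) and (c) dominate at $O(n \times n_i)$ each, yielding a per-phase time of $O(n \times n_i)$, as claimed.

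The main obstacle I anticipate is verifying that the greedy maximal matching is genuinely linear in $|E'|$ rather than incurring an extra factor from repeatedly rescanning the same edges. The clean way to handle this is to maintain, for each vertex of $B'$, a pointer into its adjacency list that only ever advances; then the total work spent scanning across all vertices telescopes to $O(|E'|)$, so no edge is inspected more than a constant number of times.
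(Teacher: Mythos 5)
Your proof is correct and follows essentially the same approach as the paper's: build the admissible graph by scanning the $O(n \times n_i)$ edges incident on $B'$, compute the greedy maximal matching in time linear in that edge set, and observe that the matching and dual updates are comparatively cheap. Your accounting is in fact slightly sharper (bounding Steps (II) and (III) by $O(n_i)$ rather than the paper's $O(n)$, and spelling out the pointer argument for the linear-time greedy scan), but this does not change the argument or the bound.
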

\begin{proof}
First, we note that the set of free vertices can easily be found in $O(n)$ time at the beginning of each phase. It is easy to see that steps (II) and (III) of the algorithm can be implemented to run in $O(n)$ time, since any matching has size $O(n)$ and each vertex experiences at most one dual adjustment per iteration. It remains to describe how step (I) can be implemented to run in $O(n \times n_i)$ time. In this step, the algorithm computes a maximal matching on the graph $G'(A' \cup B', E')$, where $E'$ is the set of admissible edges with at least one point in $B'$. Note that this graph can be constructed in $O(n \times n_i)$ time by scanning all edges incident on $B'$. Next, the algorithm finds a maximal matching $M'$ in $G'$. This matching $M'$ is found by processing each free vertex $b$ of $B'$ in an arbitrary order. The algorithm attempts to match $b$ by identifying the first edge $(a,b)$ in $G'$ such that $a$ is not already matched in $M'$. If such an edge $(a,b)$ is found, then $(a,b)$ is added to $M'$, and the algorithm processes the next vertex of $B'$. Otherwise, there is no way to add an edge incident on $b$ to $M'$. After all vertices of $B'$ are processed, $M'$ is maximal. Overall, the time for each phase is dominated by the time taken to compute a maximal matching, which is $O(n \times n_i)$.
\end{proof}

\paragraph{Parallel Efficiency}
Steps (II) and (III) of each phase of our algorithm are trivially parallelizable in $O(1)$ time. One can use the classical $O(\log n)$ time parallel algorithm for maximal matchings~\cite{israeli1986fast} to achieve a parallel execution time of $O(\log n)$ per phase and $O(\log n/\eps^2)$ in total. Maximal matchings can also be computed in massively parallel computation models; see for instance~\cite{behnezhad2019exponentially}.

\subsection{Analysis for the Unbalanced Case} In this section, we describe how the analysis of our matching algorithm can be extended to work for the unbalanced case, where $|A| \neq |B|$. This analysis is critical for proving the correctness of our optimal transport version of the algorithm. Without loss of generality, assume $|B| \leq |A| = n$. The overall description of the algorithm remains the same, except for the main routine of our algorithm produces an $\eps$-feasible matching of size at least $(1 - \eps)|B|$. The asymptotic running time of both the parallel and sequential algorithms remains unchanged. In the following lemma, we bound the additive error of our algorithm for the unbalanced case; the argument is very similar to Lemma~\ref{lem:error-balanced}.

\begin{lemma}
\label{lem:error-unbalanced}
Given an unbalanced input to the assignment problem with $|B| \leq |A|$, the $\eps$-feasible matching of cardinality at least $(1 - \eps)|B|$ that is returned by our algorithm is within an additive error of $\eps |B|$ from the optimal matching with respect to the cost function $\nc(\cdot,\cdot)$
\end{lemma}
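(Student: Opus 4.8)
The plan is to mirror the proof of Lemma~\ref{lem:error-balanced} almost verbatim, adjusting only the counting steps that relied on $M_{\opt}$ being a perfect matching on all of $A \cup B$. In the balanced case, the key facts used were: (i) the $\eps$-feasibility conditions \eqref{eq:feas1}--\eqref{eq:feas2}, (ii) the non-negativity of dual weights of free vertices (invariant (I1)), and (iii) that an optimal matching saturates every vertex. For the unbalanced case with $|B| \le |A| = n$, the optimal transport plan (equivalently, the min-cost matching that routes all of $B$) saturates every vertex of $B$ but leaves $|A| - |B|$ vertices of $A$ free. The claim to prove is that the returned $M$ has $\nc$-cost at most $\eps|B|$ above $\nc(M_{\opt})$. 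I expect the whole argument to fit in a few lines.

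First I would upper-bound $\nc(M)$ by the total dual weight. Using \eqref{eq:feas2} for the matched edges, $\sum_{(a,b)\in M}\nc(a,b) = \sum_{(a,b)\in M}\bigl(y(a)+y(b)\bigr)$. The vertices not appearing in this sum are free with respect to $M$; by invariant (I1), every free vertex of $A$ has dual weight $0$, and every vertex of $B$ (free or matched) has non-negative dual weight. Hence adding the free vertices' duals can only increase the sum, giving $\sum_{(a,b)\in M}\nc(a,b) \le \sum_{v\in A\cup B} y(v)$. This step is identical to the balanced case, since (I1) already guarantees free $A$-vertices contribute zero and free $B$-vertices contribute non-negatively; the inequality does not care how many free vertices there are.

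Next I would lower-bound the cost of $M_{\opt}$ in terms of the same dual sum. Here is the one place the unbalanced structure enters. Since $M_{\opt}$ saturates all of $B$ but only $|B|$ vertices of $A$, I cannot write $\sum_{v\in A\cup B} y(v) = \sum_{(a,b)\in M_{\opt}} \bigl(y(a)+y(b)\bigr)$ directly. Instead I would observe that $\sum_{v \in A\cup B} y(v) = \sum_{(a,b)\in M_{\opt}} \bigl(y(a)+y(b)\bigr) + \sum_{a \in A_{\mathrm{free}}} y(a)$, where $A_{\mathrm{free}}$ is the set of $A$-vertices unmatched by $M_{\opt}$. By invariant (I1) the dual weights of $A$ are non-positive, so the extra term is $\le 0$ and can be dropped from an upper bound: $\sum_{v\in A\cup B} y(v) \le \sum_{(a,b)\in M_{\opt}} \bigl(y(a)+y(b)\bigr)$. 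Applying \eqref{eq:feas1} to each of the $|B|$ edges of $M_{\opt}$ (each edge contributes an error of at most $\eps$) yields $\sum_{(a,b)\in M_{\opt}} \bigl(y(a)+y(b)\bigr) \le \sum_{(a,b)\in M_{\opt}} \nc(a,b) + \eps|B|$.

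The main obstacle — and it is a mild one — is making sure the direction of every inequality lines up. Chaining the two bounds gives $\nc(M) \le \nc(M_{\opt}) + \eps|B|$, which is exactly the claim. The only genuinely new ingredient relative to Lemma~\ref{lem:error-balanced} is the use of $y(a)\le 0$ on the $A$-vertices left free by $M_{\opt}$, which lets the same telescoping go through despite $M_{\opt}$ being non-perfect on the $A$ side; the bound $\eps|B|$ (rather than $\eps n$) appears automatically because the summation over $M_{\opt}$ ranges over exactly $|B|$ edges.
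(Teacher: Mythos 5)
Your proposal is correct and follows essentially the same route as the paper's proof: upper-bound $\nc(M)$ by $\sum_{v\in A\cup B}y(v)$ using \eqref{eq:feas2} on matching edges and the signs of free vertices' duals, then drop the non-positive duals of the $A$-vertices left unmatched by $M_{\opt}$ and apply \eqref{eq:feas1} to the $|B|$ edges of $M_{\opt}$. The only difference is that you write out the decomposition $\sum_{v}y(v) = \sum_{(a,b)\in M_{\opt}}(y(a)+y(b)) + \sum_{a\in A_{\mathrm{free}}}y(a)$ explicitly, which the paper states in prose.
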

\begin{proof}
Let $M$ be the matching produced by our algorithm, and let $M_{\opt}$ be an optimal matching with respect to the cost function $\nc(\cdot, \cdot)$. From~\eqref{eq:feas1}, and the fact that the dual weights of all free vertices with respect to $M$ are non-negative we have $\sum_{(a,b) \in M}\nc(a,b) = \sum_{(a,b) \in M}y(a) + y(b) \leq \sum_{v \in A \cup B}y(v)$. Note that $M_{\opt}$ is a maximum-cardinality matching, and so all vertices of $B$ are matched in $M_{\opt}$. Therefore, all vertices unmatched by $M_\opt$ are of type $A$, and the algorithm maintains that such vertices have a non-positive dual weight. Combining this with~\eqref{eq:feas2}, we get $\sum_{v \in A \cup B}y(v) \leq \sum_{(a,b) \in M_{\opt}} y(a) + y(b) \leq \sum_{(a,b) \in M_{\opt}} \nc(a,b) + \eps|B|$. Combining these two observations completes the proof of the lemma.
\end{proof}

\section{Extending the algorithm to OT}
\label{sec:transport}

Given an instance $\mathcal{I}$ of the optimal transport problem, one can scale the supplies and demand at each node by a multiplicative factor of $\theta$. Such a scaling will not affect the optimal transport plan but will increase its cost by a factor of $\theta$. Thus, to find an $\eps$-approximation, it suffices if we find a solution that is an additive error of $\eps\theta$ from the optimal.

Next, as in~\cite{our-neurips-2019-otapprox}, we choose $\theta = 4n/\eps$, round the supplies down to the closest integer, and round the demands up to the closest integer. Then, we create a matching instance by simply replacing each node with a demand $d_a$ (resp. supply $s_b$) with $d_a$ (resp. $s_b$) copies, each having a demand (resp. supply) of $1$. Let $\mathbb{A}$ and $\mathbb{B}$ be the multi-set of demand and supply nodes created by these copies. We then solve this new instance  $\mathcal{I}'$ of the \emph{unbalanced matching problem} using our algorithm from Section~\ref{sec:main-algo}. By our choice of $\theta$, the size of this new instance (in terms of number of vertices) is $\Theta((\theta+n)) = \Theta(n/\eps)$.  The time taken by our algorithm will be $O(n^2/\eps^3)$ sequential time and $O(\log (n/\eps)/\eps^2)$ parallel time. 
To speed the sequential algorithm by a factor of $1/\eps$, we explicitly maintain the following invariant: We raise the dual weight of any unmatched free supply node $b \in \mathbb{B}$ to be at least as large as the largest dual weight among all its other copies $b' \in \mathbb{B}$. This can be enforced in a straight-forward way while updating the matching. Furthermore, this change in dual weight does not violate any feasibility conditions.   

Given this property, like in~\cite{lr_jocg,sa_soda12}, we show in Lemma~\ref{lem:twoclusters} that copies of the same vertex can have no more than two distinct dual weights at any time. Thus, we can maintain at most two clusters for each point and bring down the execution time of each iteration to $O(n^2)$ and the total execution time to $O(n^2/\eps^2)$.

\begin{lemma}\label{lem:twoclusters}
At any point in the execution of our algorithm,
given three copies of the same vertex $v \in A\cup B$ in $\mathcal{I}'$, at least two of them will share the same dual weight. 
\end{lemma}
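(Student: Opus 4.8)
The plan is to show that at most two distinct dual-weight values can coexist among copies of a single vertex, which immediately implies that any three copies cannot all differ. I would proceed by induction on the phases, tracking how the dual weights of copies of a fixed vertex $v$ evolve. Initially, all copies of $v$ receive the same dual weight (either $0$ if $v \in \mathbb{A}$ or $\eps$ if $v \in \mathbb{B}$), so the base case holds trivially with a single distinct value. The inductive hypothesis is that before a given phase, the copies of $v$ take at most two distinct dual-weight values; I would argue that after the dual update step (III), this property is preserved.

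\textbf{The case $v \in \mathbb{B}$.} Here I would invoke the explicitly maintained invariant described just before the lemma statement: the dual weight of any unmatched free supply copy is raised to be at least as large as the largest dual weight among the other copies of $v$. The key observation is that in step (III(b)), only \emph{free} copies of $v$ in $B'$ have their dual weight increased by $\eps$, while matched copies are unchanged. I would show that the free copies all share a common dual weight (the current maximum, by the invariant), so after incrementing them all by $\eps$ they still share a single value, and the matched copies retain whatever values they had. Combined with the inductive hypothesis, I must check that this does not create a third value; the invariant is precisely what collapses the free copies into one cluster, so the two clusters are ``matched copies at the old max'' and ``free copies at old max $+\,\eps$,'' keeping the count at two.

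\textbf{The case $v \in \mathbb{A}$.} Here step (III(a)) decreases $y(a)$ by $\eps$ exactly for those copies matched in $M'$ during this phase. I would argue, using Lemma~\ref{lem:match-update} together with the structure of the greedy matching, that the copies of $v$ whose dual weight is lowered are exactly those newly matched in $M'$, and that these start from a common value, so they move together to a single new value. The remaining copies keep their old values. The subtle point is ensuring that a copy already at the lower cluster is never the one being decremented again in a way that spawns a third value; I would lean on invariant (I1) and the observation that only copies matched in the current $M'$ (hence previously at the higher dual weight among the copies) are decremented.

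\textbf{Main obstacle.} The hard part will be the bookkeeping in the $\mathbb{A}$ case: I must carefully show that every copy decremented in step (III(a)) sits in the \emph{same} cluster before the decrement, so that the decrement maps one cluster onto a single new value rather than splitting an existing cluster into two (which would then, together with the untouched cluster, yield three values). This requires relating which copies get matched in the greedy $M'$ to their current dual weight, and I expect this to hinge on the fact that admissibility ($s(u,v)=0$) ties a copy's eligibility for matching to its dual weight, so copies in different clusters have different slacks and only the admissible ones enter $M'$.
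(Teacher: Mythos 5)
Your approach --- induction on phases with explicit bookkeeping of how the clusters of dual weights move --- is genuinely different from the paper's, and as sketched it has a gap that the bookkeeping alone cannot close. The paper's proof is a short \emph{static} argument requiring no induction: take two copies $v_1,v_2$ of $v\in B$ with $y(v_1)<y(v_2)$. By the explicitly maintained invariant (a free supply copy has dual weight at least that of every other copy), the copy with the smaller dual weight must be matched, say $v_1$ to some $a\in\mathbb{A}$. Since all copies of $v$ have identical costs to $a$, applying~\eqref{eq:feas2} to $(v_1,a)$ and~\eqref{eq:feas1} to $(v_2,a)$ gives $y(v_2)\le \nc(v,a)+\eps-y(a)=y(v_1)+\eps$. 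Hence all copies' dual weights lie in an interval of length $\eps$, and since they are integer multiples of $\eps$ they take at most two distinct values.

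The concrete gap in your plan is that the inductive hypothesis ``at most two distinct values before the phase'' is not strong enough to survive the phase. In the $\mathbb{B}$ case you assert the resulting clusters are ``matched copies at the old max'' and ``free copies at old max $+\,\eps$,'' but nothing in your argument prevents the matched copies from occupying \emph{two} values $y$ and $y+\eps$ before the phase, with the free copies sitting at $y+\eps$; a free copy that stays free after the greedy step then moves to $y+2\eps$, apparently producing three values. Excluding this configuration requires precisely the feasibility computation above (a copy at $y+2\eps$ would violate~\eqref{eq:feas1} against the partner of the copy matched at $y$), at which point the induction is superfluous. Your sketch also does not account for copies of $v\in\mathbb{B}$ that become unmatched during Step (II) when their partner is reassigned by $M'$, whose dual weights are then re-raised by the maintained invariant --- another way the cluster structure shifts that pure phase-by-phase tracking must handle. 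The quantitative fact you actually need, and which the paper reads off directly from~\eqref{eq:feas1}--\eqref{eq:feas2}, is that the \emph{spread} of dual weights over copies never exceeds $\eps$; ``at most two clusters'' is then immediate because dual weights are multiples of $\eps$.
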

\begin{proof}
Suppose there are three copies of $v \in B$ that have three distinct dual weights. Let $v_1$ and $v_2$ be two copies of $v$ such that 
\begin{equation}
    y(v_1) < y(v_2) -2\eps.
    \label{eq:match2}
\end{equation} From the fact that the dual weight of $v_2$ is larger than that of $v_1$, we conclude that $v_1$ is matched to some $a \in \mathbb{A}$. From~\eqref{eq:feas2}, we have \begin{equation}y(v_1)+ y(a) = \nc(a,b).\label{eq:match}
\end{equation} Now consider the edge $(v_2, a)$. From~\eqref{eq:feas1}, $y(v_2)+ y(a) \le \nc(a,b)+\eps = y(a)+y(v_1)+\eps$. The last equality follows from~\eqref{eq:match}. This implies $y(v_2) \le y(v_1)+\eps$ contradicting~\eqref{eq:match2}.
\end{proof}

\begin{theorem}
The push-relabel algorithm can compute an $\eps$-approximation to the optimal transport in $O(n^2/\eps^2)$ sequential time and $O(\log (n/\eps)/\eps^2)$ parallel time.
\end{theorem}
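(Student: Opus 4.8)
The plan is to assemble the final theorem by combining the accuracy guarantee (via Lemma~\ref{lem:error-unbalanced}), the two-cluster structural property (Lemma~\ref{lem:twoclusters}), and the per-phase efficiency bounds already established for the matching algorithm, all applied to the reduced instance $\mathcal{I}'$ of the unbalanced matching problem. First I would set up the reduction carefully: fix the scaling factor $\theta = 4n/\eps$, round supplies down and demands up to integers, and expand into the multiset instance $\mathcal{I}'$ on vertex sets $\mathbb{A}, \mathbb{B}$. The key accounting here is to verify that the total error introduced by this discretization is an additive $O(\eps\theta)$ with respect to the original (unscaled) transport cost, so that after dividing by $\theta$ we recover an $\eps$-approximation. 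I would track three sources of error exactly as in the assignment case: the rounding of the supplies/demands (which is where the choice $\theta = 4n/\eps$ is calibrated so that rounding each of the $O(n)$ coordinates by at most $1$ perturbs the transported mass by $O(n)$, hence the cost by $O(n)$, i.e. $O(\eps\theta)$), the cost-rounding from $c$ to $\nc$, and the $\eps$-feasibility slack. Lemma~\ref{lem:error-unbalanced} handles the last two for the matching we compute on $\mathcal{I}'$.

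Next I would establish the running-time bounds. The number of vertices in $\mathcal{I}'$ is $N = \Theta(\theta + n) = \Theta(n/\eps)$, so a naive application of the matching analysis gives $O(N^2/\eps) = O(n^2/\eps^3)$ sequential time. The main work of the theorem is to shave the extra $1/\eps$ factor, and this is exactly where Lemma~\ref{lem:twoclusters} enters. The idea is that because all copies of a single original vertex $v$ carry at most two distinct dual weights at any moment, we never need to represent the $\Theta(1/\eps)$ copies of $v$ explicitly; instead we store, for each of the $n$ original vertices, a compact description consisting of at most two dual-weight ``clusters'' together with counts. Under this representation each phase operates on $O(n)$ aggregated cost entries rather than $O(N^2) = O(n^2/\eps^2)$ edges, so by re-running the per-phase argument of Lemma~\ref{lem:efficientphase} in the aggregated model each phase costs $O(n^2)$ rather than $O(N \cdot n_i)$. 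Since the bound on the number of phases, $t = O(1/\eps^2)$, comes directly from Lemma~\ref{lem:magbound} and Lemma~\ref{lem:perphase} and is unaffected by the clustering (the dual-weight magnitude bound is independent of instance size), multiplying $O(n^2)$ per phase by $O(1/\eps^2)$ phases yields the claimed $O(n^2/\eps^2)$ sequential time.

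For the parallel bound I would simply invoke the per-phase parallel cost: steps (II) and (III) are $O(1)$ and the maximal-matching computation in step (I) takes $O(\log N) = O(\log(n/\eps))$ parallel time using the cited parallel maximal-matching algorithm~\cite{israeli1986fast}, giving $O(\log(n/\eps)/\eps^2)$ total over the $O(1/\eps^2)$ phases.

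I expect the main obstacle to be making the clustering speed-up rigorous: Lemma~\ref{lem:twoclusters} guarantees the two-cluster invariant holds, but I must show that a \emph{single phase} can actually be executed in $O(n^2)$ time while only manipulating the cluster representation — in particular that the greedy maximal matching, the matching update (II), and the dual update (III) can all be carried out on the aggregated counts without ever expanding a vertex into its $\Theta(1/\eps)$ copies, and that the explicitly maintained invariant (raising each free supply copy's dual weight to match the largest among its siblings) can be enforced in $O(n)$ time per phase consistently with $\eps$-feasibility. Verifying that this aggregated bookkeeping faithfully simulates the copy-level algorithm, and hence inherits both the accuracy guarantee of Lemma~\ref{lem:error-unbalanced} and the phase bound, is the delicate part; the error accounting and the parallel bound are comparatively routine once the reduction is fixed.
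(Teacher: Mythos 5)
Your proposal follows essentially the same route as the paper: the paper's justification for this theorem is exactly the preceding discussion in Section~\ref{sec:transport} --- scale by $\theta = 4n/\eps$, expand into the unbalanced multiset instance $\mathcal{I}'$ of size $\Theta(n/\eps)$, invoke the unbalanced-matching analysis (Lemma~\ref{lem:error-unbalanced} for accuracy, the phase bound $t = O(1/\eps^2)$ which is indeed independent of instance size), and use the free-copy dual-weight invariant together with Lemma~\ref{lem:twoclusters} to compress each original vertex into at most two clusters and reduce the per-phase cost from $O(N \cdot n_i)$ to $O(n^2)$. The one part you flag as delicate --- showing that a single phase can actually be simulated on the cluster representation in $O(n^2)$ time --- is also the one part the paper itself asserts without detail, so your proposal is, if anything, more explicit than the paper about where the remaining work lies.
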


\section{Experiments} 
\label{sec:experiments}

\begin{figure}
\centering
\includegraphics[width=0.65\textwidth]{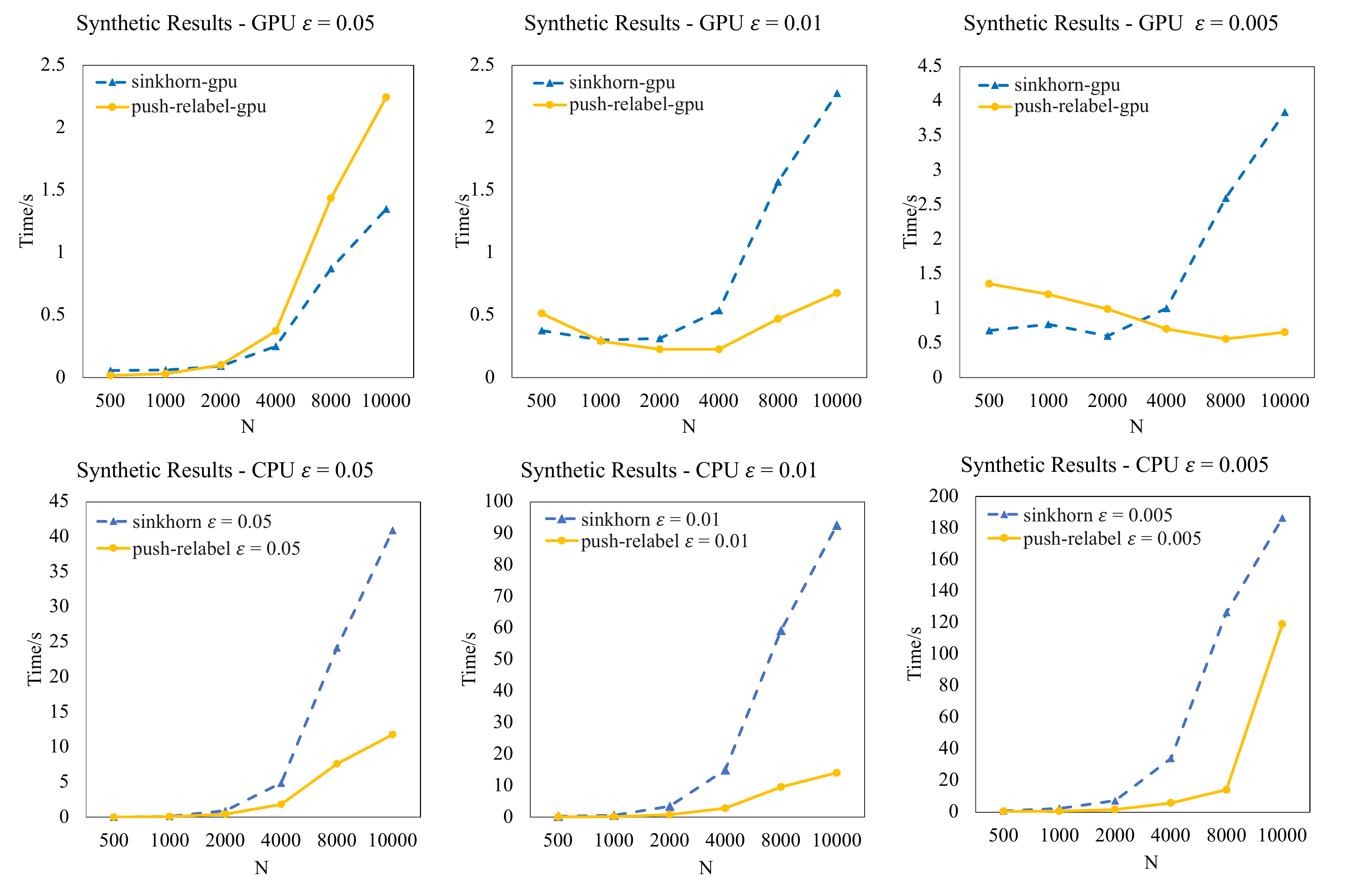}
\caption{A plot of running times for the synthetic inputs.}
\label{fig:experimental-results-synthetic}
\end{figure}
In this section, we present our experimental results. In our experiments, we compare an implementation of our $\eps$-approximate push-relabel based algorithm for solving the assignment problem with an implementation of the Sinkhorn algorithm~\cite{sinkhorn} for computing an $\eps$-approximate optimal transport plan. We implement two versions of our algorithm, both written in Python~\footnote{Our implementations and experimental setup can be found at https://github.com/kaiyiz/Push-Relabel-OT}. The first implementation is written with the NumPy library and uses only the CPU for computations, while the second implementation uses the CuPy library to execute some of the more expensive parts of the algorithm on a GPU. We compare these implementations of our algorithm to corresponding CPU-based and GPU-based implementations of the Sinkhorn algorithm. This Sinkhorn implementation is part of the Python Optimal Transport (POT) library~\cite{POT}. The POT library contains a CPU-based implementation of the Sinkhorn algorithm, as well as a GPU-based implementation.

Our experiments are run using an Intel Xeon E5-2680v4 2.4GHz chip and an Nvidia Tesla P100 GPU. We run experiments using both real and synthetic data. For the synthetic data, we generate each of the vertex sets $A$ and $B$ by sampling $n$ two-dimensional points uniformly at random from a unit square. For any pair of points $(a,b) \in A \times B$, the cost $c(a,b)$ is set as the Euclidean distance between the points $a$ and $b$. For each value of $\eps$ in $[0.1, 0.01, 0.005]$, and for each value of $n$ in $[500, 1000, 2000, 4000, 8000, 10000]$, we execute $30$ runs. For each run, we generate the cost matrix $c(\cdot, \cdot)$ and invoke both the Sinkhorn algorithm and our algorithm using several different values of $\eps$. Note that, for, in order to use the Sinkhorn algorithm to solve the assignment problem, we simply set the supply and demand of each vertex to $1 / n$. For each combination of $n$, $\eps$, and algorithm choice, we average the running times over all $30$ runs and record the results. The results can be seen in Figure~\ref{fig:experimental-results-synthetic}.

\begin{figure}
\centering
\includegraphics[width=0.55\textwidth]{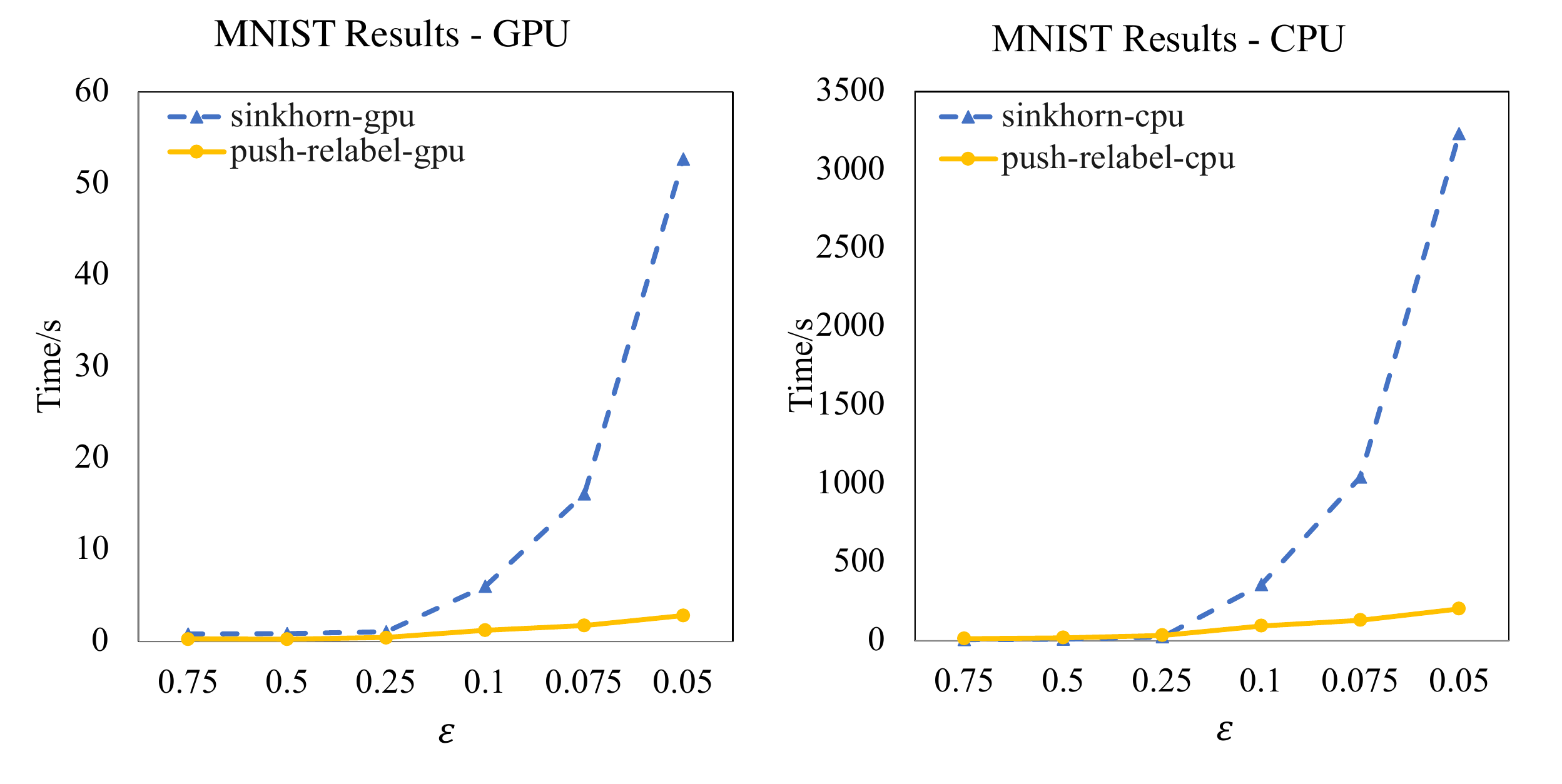}
\caption{A plot of running times for the MNIST image inputs.}
\label{fig:experimental-results-MNIST}
\end{figure}
Next, we run a similar experiment, using real-world data. We generate our inputs using the MNIST dataset of hand-written digit images~\cite{MNIST}. Each image consists of a $28 \times 28$ pixel gray-scale image. The sets $A$ and $B$ each consist of $n=10,000$ images from the MNIST dataset, selected at random. The cost $c(a,b)$ between two images $a \in A$ and $b \in B$ is computed as follows: Let $a(i,j)$ (resp. $b(i,j)$) be the value of the pixel in row $i$ and column $j$ of image $a$ (resp. $b$). First, the two images are normalized so that the sum of all pixel values is equal to $1$ for each image, i.e., $\sum_{i,j \in [1, 28]}a(i,j) = 1$ and $\sum_{i,j \in [1, 28]}b(i,j) = 1$. Then, the cost $c(a,b)$ is given by the $L_1$ distance between the resulting normalized images: $c(a,b) = \sum_{i,j \in [1, 28]}|a(i,j) - b(i,j)|$. Note that an upper bound on the largest cost is $2$. For each value of $\eps$ in $\{0.75, 0.5, 0.25, 0.1\}$, we execute both the CPU and GPU implementations of both our algorithm and the Sinkhorn algorithm. Once again, for each algorithm implementation and for each value of $\eps$, we average the results over $30$ runs. The results for these experiments can be found in Figure~\ref{fig:experimental-results-MNIST}.

Next, we summarize some observations from our experimental results. First, note that, for small values of $\eps$, the Sinkhorn algorithm's running time increases dramatically; this is especially noticeable for the MNIST results. This phenomena may be occurring because of issues resulting from numerical precision, due to Sinkhorn's regularization process, which uses an exponential scaling of a matrix. In contrast, our algorithm does not suffer from such issues with numerical precision. 

Overall, our results suggest that both the CPU-based and GPU-based implementations of our push-relabel algorithm seem to be competitive with the Sinkhorn algorithm. It is worth noting that this algorithm is very new in comparison to the Sinkhorn algorithm. As such, it is possible that a more careful implementation, especially for the GPU version,
could result in further improvements to actual running times.

\paragraph{Acknowledgment:} This research was partially supported by NSF CCF 1909171. The authors would like to thank Abhijeet Phatak and Chittaranjan Tripathy for helping us create a GPU implementation of our algorithm.

\bibliography{main}
\bibliographystyle{plain}
\end{document}